
\documentclass[letterpaper, 10 pt, conference]{ieeeconf}  

\IEEEoverridecommandlockouts                              

\overrideIEEEmargins                                      



\usepackage{amsmath}
\usepackage{amssymb}
\usepackage{tikz}

\usepackage{amsthm}
\usepackage{xspace}

\usepackage{graphics} 

\newtheorem{myremark}{Remark}

\newtheorem{mylem}{Lemma}
\newtheorem{mydef}{Definition}
\newtheorem{myprop}{Proposition}
\newtheorem{mycor}{Corollary}
\newtheorem{mytheorem}{Theorem}

\newcommand{\refineCBF}{\textsc{refineCBF}\xspace}
\usepackage{xcolor}
\usepackage{accents}
\usepackage{centernot}
\usepackage{algorithm}
\usepackage[noend]{algpseudocode}
\usepackage{subcaption} 
\usepackage{mathtools}
\usepackage[font=small,labelfont=bf]{caption}

\newcommand{\LL}{\mathcal{L}}

\newcommand{\UU}{\mathcal{U}}

\newcommand{\ctrl}{u}


\newcommand{\BV}{B}
\newcommand{\BVg}{\BV_{\gamma}}
\newcommand{\BVgi}[1] {
  \BV_{\gamma_{#1}}}
\newcommand{\dBVgi}[1] {
  \dot{\BV}_{\gamma_{#1}}}
\newcommand{\gi}[1]{\gamma_{#1}}
\newcommand{\maxu}{\max_{\ctrl\in\UU_{[t,0]}}}
\newcommand{\mint}{\min_{s\in[t,0]}}

\newif\ifmargincomments
\margincommentstrue
\ifmargincomments
\setlength{\marginparwidth}{40pt}

\else

\fi

\newif\ifsuggestions
\suggestionstrue

\ifsuggestions
\newcommand\tonkens[1]{\textcolor{cyan}{[ST: #1]}}
\newcommand\shnote[1]{\textcolor{violet}{[SH: #1]}}

\else
\newcommand\tonkens[1]{}
\newcommand\shnote[1]{}
\fi

\makeatletter
\let\NAT@parse\undefined
\makeatother
\usepackage{hyperref}

\title{\LARGE \bf
Refining Control Barrier Functions through Hamilton-Jacobi Reachability
}

\author{Sander Tonkens and Sylvia Herbert
\thanks{Sander Tonkens and Sylvia Herbert are with Department of Mechanical and Aerospace Engineering, University of California, San Diego \{\href{mailto:stonkens@ucsd.edu}{stonkens}, \href{mailto:sherbert@ucsd.edu}{sherbert}\}@ucsd.edu. This work was supported by the ONR YIP program (grant \#N00014-22-1-2292); this article solely reflects the opinions and conclusions of its authors.}
}

\begin{document}

\maketitle
\thispagestyle{empty}
\pagestyle{empty}
\begin{abstract}
Safety filters based on Control Barrier Functions (CBFs) have emerged as a practical tool for the safety-critical control of autonomous systems. These approaches encode safety through a value function and enforce safety by imposing a constraint on the time derivative of this value function. However, synthesizing a \emph{valid} CBF that is not overly conservative in the presence of input constraints is a notorious challenge. In this work, we propose refining a \emph{candidate} CBF using formal verification methods to obtain a \emph{valid} CBF. In particular, we update an expert-synthesized or backup CBF using dynamic programming (DP) based reachability analysis. Our framework, \refineCBF, guarantees that with every DP iteration the obtained CBF is provably at least as safe as the prior iteration and converges to a \emph{valid} CBF. Therefore, \refineCBF can be used in-the-loop for robotic systems. We demonstrate the practicality of our method to enhance safety and/or reduce conservativeness on a range of nonlinear control-affine systems using various CBF synthesis techniques in simulation.  
\end{abstract}

\section{Introduction}
Widespread deployment of autonomous systems, from self-driving vehicles to medical robots, hinges on their ability to perform complex tasks while providing safety assurances. The growing complexity of the scenarios these systems operate in demands that safety properties are rigorously encoded in the controller design. At its core, safety is a constraint satisfaction problem; the state of the system must never enter any pre-defined failure regions such as collisions or traffic rule violations~\cite{FisacLugovoyEtAl2019}. \emph{Ergo}, a core research problem over the years has been synthesizing a control invariant subset of the non-failure regions, i.e. a set of initial states that can remain out of the failure region using admissible control inputs for a predefined period of time -- or indefinitely.

Within a typical autonomy stack, a popular approach to enforce set invariance relies on using a \emph{safety filter}~\cite{BrunkeGreeffEtAl2021}. Safety filters minimally modify an arbitrary nominal (safety-agnostic) policy in order to keep the system's trajectory from going into a pre-defined set of obstacles. In our work we achieve this by enforcing a constraint on (i.e. filtering) the control applied at every point along the trajectory. While decoupling safety from performance in such manner can lead to suboptimal (although safe) behavior~\cite{KollerBerkenkampEtAl2018}, it offers more modularity than considering these potentially conflicting objectives jointly.

In this paper, we focus on value function-based approaches for jointly encoding and enforcing safety. Value functions endow nonlinear systems with rigorous safety guarantees and have been successfully deployed on hardware~\cite{DawsonLowenkampEtAl2022, SingletarySwannEtAl2022}. Value function-based safety approaches encode a state's relative degree of safety through its value and enforce safety via set invariance by imposing a constraint on its derivative. However, many current methods either encode safety or enforce safety in an ad-hoc manner (e.g. using hand-designed heuristic functions or data-driven functions). In this work, we assume we are given a heuristically constructed safe set defined through a Control Barrier Function (CBF) candidate. This candidate CBF candidate may cause either unsafe or overly conservative behavior. We propose updating this function with a principled constructive approach leveraging Hamilton-Jacobi (HJ) reachability analysis. Upon convergence of our proposed iterative algorithm, we obtain a CBF that is guaranteed to be safe. See Figure~\ref{fig:illustration} for a visual explanation of our algorithm.
\begin{figure}[!t]
    \vspace{0.25cm}
    \centering
    \includegraphics[width=\columnwidth]{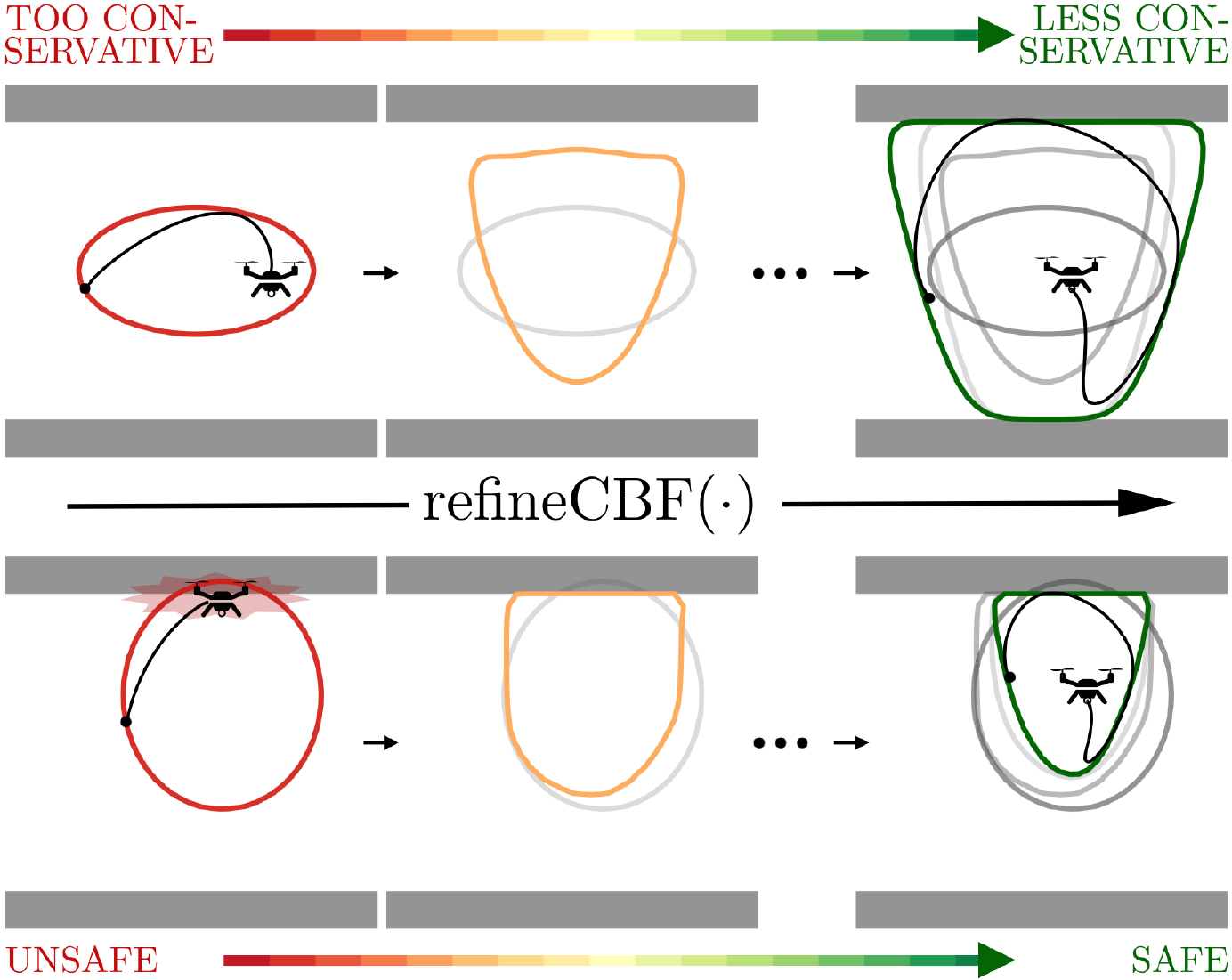}
    \caption{Illustrative depiction of our proposed \refineCBF algorithm, with an obstacle (filled gray). At low velocities (top) the candidate CBF (left) is too conservative, whereas at high velocities (bottom) it is unsafe. Our framework updates the value function until convergence. \refineCBF can be used with any CBF to enhance safety (bottom) and/or performance (top). It guarantees that the found CBVF becomes monotonically more valid with each iteration and is guaranteed to converge to a valid CBVF at convergence.}
    \label{fig:illustration}
    \vspace{-0.8cm}
\end{figure}

\textbf{Related work:} \textbf{\emph{Control Barrier Functions}} have recently become popular as a tool for maintaining safety for a wide range of autonomous systems~\cite{AmesCooganEtAl2019}. CBFs define a desired safe set as their 0-superlevel set. To maintain trajectories within this 0-superlevel set, an inequality constraint is placed on the control input through the derivative of the CBF. This condition can be principally enforced through a safety filter that solves an optimization problem online. Crucially, for control-affine systems, this inequality constraint is linear in the control input. As such, the resulting minimally invasive safety filter that maintains safety can be described as a quadratic program, which can be efficiently solved online. However, CBFs are non-constructive and hence are typically handcrafted by experts. Often these synthesized CBFs are merely \emph{candidate} CBFs, and hence do not provide persistent safety and feasibility guarantees~\cite{ZengZhangEtAl2021}. To circumvent the challenges of hand-synthesizing valid yet not overly conservative CBFs, recent work has leveraged function approximation schemes to learn CBFs, e.g., using safe expert trajectories~\cite{RobeyHuEtAl2020} or labeled state-action pair samples~\cite{DawsonQinEtAl2021}. Imitation learning-inspired approaches can lead to undesired and/or overly conservative behavior in practice, whilst sampling-based approaches struggle to provide strict safety guarantees. Additionally, these neural CBFs are trained based on a set of assumptions that might not be valid for deployment in realistic settings.

Recently, backup CBFs have been proposed as a method to address the difficulty of synthesizing a control invariant set~\cite{GurrietMoteEtAl2019}. These controllers implicitly define a set of states as safe if they are safely within reach of an appropriate backup set when using a pre-defined backup policy (e.g. maximally breaking maneuver for a car). Optimization-based backup CBF schemes~\cite{GurrietMoteEtAl2019, ChenJankovicEtAl2021} offer a principled way of enforcing invariance by imposing additional constraints on the control input along the backup trajectory. However, this method substantially increases the computational complexity of the optimization problem that defines the safety filter and, hence, is typically not suited for hardware applications~\cite{SingletarySwannEtAl2022}. In contrast, derivative-free backup CBF schemes~\cite{SingletaryGurrietEtAl2020, AbateCoogan2020} solely require evaluating the flow of the dynamics under the backup strategy. However, they can suffer from degraded performance near the boundary of the implicitly defined control invariant set and are not principally enforced using a safety filter, but as a hand-tuned trade-off between safety and performance. Additionally, derivative-free backup CBFs do not provide robustness guarantees, unlike CBFs, which enforce an exponential return to safety from unsafe states~\cite{XuTabuadaEtAl2015}.

For more complex robotic systems, synthesizing an analytic CBF or defining a good backup policy can be very complicated. For such scenarios, \textbf{\emph{Hamilton-Jacobi reachability analysis}}, a formal method, can be used to obtain a safe set. HJ-based value functions can be solved numerically using dynamic programming (DP)~\cite{MitchellBayenEtAl2005} to obtain the largest feasible control invariant set, the \emph{viability kernel}, and the optimally safe control at each state. While constructive, DP methods rely on spatial discretization of the state space and hence scale exponentially with the dimension of the system. Techniques leveraging system structure, warmstarting the DP computation, and other computational methods can significantly improve computational scalability~\cite{HerbertChoiEtAl2021}, however the curse of dimensionality remains a fundamental limitation to any DP-inspired approach. Additionally, while HJ reachability provides a constructive method to encode the largest control invariant set, it cannot be readily used with a safety filter~\cite{LeungSchmerlingEtAl2019}. 

\par\textbf{\emph{Control Barrier Value Functions}} (CBVFs) were recently introduced by Choi Et Al.~\cite{ChoiLeeEtAl2021} in an effort to realize both the constructiveness of HJ reachability and the principled online enforcement of CBFs. CBVFs recover the viability kernel and enable enforcing safety online using the CBF constraint. Yet, like HJ reachability, the computation of a CBVF requires state discretization and thus does not scale well with the state dimension. In summary, constructive methods for encoding safety (HJ reachability, CBVFs) suffer from poor scalability, whereas constructive methods for enforcing set invariance (CBFs) are often overly conservative and/or cannot guarantee safety.

\textbf{Contributions:} This work proposes refining CBFs using HJ reachability. This allows us to take advantage of hand-tuned or data-driven CBF candidates as an efficient initialization that is then rigorously refined using dynamic programming. We refer to this algorithm as \refineCBF. We show that the obtained CBVF is a valid finite-time CBVF throughout the iterations of \refineCBF that is guaranteed to not become more unsafe with every iteration. Additionally, \refineCBF recovers a valid (infinite-time) CBVF at convergence. We demonstrate how \refineCBF, can be used with both analytical and backup set based CBFs to (a) enlarge the safe set of overly conservative CBFs, improving performance, and (b) refine invalid CBFs, improving safety.\footnote{All code is available at \href{https://github.com/UCSD-SASLab}{https://github.com/UCSD-SASLab/refineCBF}}

\textbf{Organization:}
We start with a brief overview of CBFs and CBVFs in Section~\ref{sec:background}. Next, we motivate the usefullness of refining CBFs using DP-based reachability by introducing several practically relevant use cases in the context of robotics in Section~\ref{sec:relevance}. We provide the theoretical foundations for refining CBVFs from CBFs in Section~\ref{sec:theory}, followed by how \refineCBF is implemented in practice~\ref{sec:practice} We present case studies in simulation in Section~\ref{sec:results} and conclude with the merits of \refineCBF and avenues of future work in Section~\ref{sec:conclusion}.

\section{Preliminaries}\label{sec:background}
For ease of notation, we consider disturbance-free dynamics, but note that the theory below holds for bounded disturbances that draw from \emph{non-anticipative strategies} with respect to the input~\cite{EvansSouganidis1984}. We consider a control-affine system
\begin{equation}\label{eq:ol_dynamics}
    \dot{x} = f(x) + g(x)u,
\end{equation}
with state $x \in \mathcal{X} \subseteq \mathbb{R}^n$, input $u \in \mathcal{U} \subseteq \mathbb{R}^m$, and functions $f: \mathbb{R}^n \to \mathbb{R}^n$ and $g: \mathbb{R}^n \to \mathbb{R}^{n \times m}$ assumed to be locally Lipschitz continuous on their domains. Specifying the input as a control policy $\pi: \mathbb{R}^n \to \mathbb{R}^m$ that is locally Lipschitz continuous on its domain yields the closed-loop dynamics:
\begin{equation}\label{eq:cl_dynamics}
    \dot{x} = f(x) + g(x)\pi(x).
\end{equation}
We define system trajectories at time $t$ starting at $x_0$, $t_0$ under control signal $u$ by $\xi^u_{x_0, t_0}(t)$. The Lipschitz continuity of $f, g$, and $\pi$ provide a sufficient condition on the existence and uniqueness of solutions to~\eqref{eq:cl_dynamics}, which is required for most of the results discussed below to hold.

We consider a constraint set $\LL \subseteq \mathcal{X}$, defined as a 0-superlevel set of a bounded Lipschitz continuous function $\ell: \mathcal{X} \to \mathbb{R}$. In practice, given a set of failure states that the system should avoid entering, we define $\LL$ as its complement. We call $\ell(x)$ the \emph{constraint function}, which encodes a notion of distance from the failure states (e.g. a signed distance function denoting distance to obstacles). Staying safe is equivalent to assuring a trajectory remains in $\LL$.

Next, we define the viability kernel $\mathcal{S}(t)$ as the largest control invariant subset of the constraint set. $\mathcal{S}(t)\subseteq \LL$ is the set of all initial states from which there exists an admissible control signal that keeps the system inside $\LL$, i.e. that keeps the system safe, over a time duration $t$:
\begin{equation*}
\begin{split}
        \mathcal{S}(t):= \{x \in \LL: &\exists \ u(\cdot) \in \mathcal{U} \text{ s.t. } \xi_{x,t}^u(s) \in \LL \ \forall s \in [t, 0]\}.
\end{split}
\end{equation*}
We denote the infinite-time viability kernel as $\mathcal{S}^*$.

\subsection{Control Barrier Functions}
Let $\mathcal{C}_h$ denote a 0-superlevel set of the function $h: \mathbb{R}^n \to \mathbb{R}$, i.e. $\mathcal{C}_h = \{x \in \mathcal{X} \mid h(x) \geq 0\}$.
\begin{mydef}[Control Barrier Function~\cite{AmesXuEtAl2016}]
A continuously differentiable function $h$ is a Control Barrier Function for~\eqref{eq:ol_dynamics} if there exists an extended class $\mathcal{K}$ function\footnote{A continuous function $\alpha: (-b, a) \rightarrow (-\infty, \infty)$ is said to belong to extended class $\mathcal{K}$ for some $a, b > 0$ if it is strictly increasing and $\alpha(0) = 0$.} $\alpha$ such that:
\begin{equation}\label{eq:cbf}
    \sup_{u \in \mathcal{U}} \dot{h}(x):= \sup_{u\in\mathcal{U}} L_f h(x) + L_g h(x) u \geq -\alpha(h(x)).
\end{equation}
\end{mydef}

In practice, a linear function $\alpha(z) = \gamma z$ with $\gamma \in \mathbb{R}_+$ is often used, in which case $\gamma$ serves as the \emph{maximal discount rate} of $h(x)$. As such, we will consider this linear function in the remainder of the paper.
\begin{myremark}[Candidate CBF]
    We define a \textbf{candidate} CBF as a continuously differentiable function $h: \mathbb{R}^n \to \mathbb{R}$ for a closed set $\mathcal{C} \subseteq \mathcal{X}$ if it satisfies $h(x) > 0 \Leftrightarrow x \in \text{Int}(\mathcal{C}$) and $h(x) = 0 \Leftrightarrow x \in \partial \mathcal{C}$, which is necessary but not sufficient for~\eqref{eq:cbf}. For a candidate CBF to be a \textbf{valid} CBF it must additionally satisfy (a) Eq. $\eqref{eq:cbf}$ $\ \forall x \in \mathcal{C}$ and (b) $\mathcal{C}$ to be a subset of the infinite-time viability kernel $\mathcal{S}^*$.
\end{myremark}
Valid CBFs can be used to impose a constraint on the input of the system which assures the forward invariance of $\mathcal{C}_h$ and hence guarantees safety.
\begin{mycor}[Forward invariance of $\mathcal{C}_h$~\cite{AmesXuEtAl2016}]
    Let $h$ a CBF on $\mathcal{X}$, then any Lipschitz continuous policy $\pi: \mathcal{X} \to \mathcal{U}$ such that $\pi(x) \in \mathcal{K}_h(x)$ where:
    \begin{equation}\label{eq:cbf_constraintset}
        \mathcal{K}_h(x):=\{u \in \mathcal{U} \mid \dot{h}(x) \geq -\gamma h(x)\},
    \end{equation}
    will render $\mathcal{C}_h$ forward invariant.
\end{mycor}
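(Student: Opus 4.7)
The plan is to establish forward invariance of $\mathcal{C}_h$ by applying the Comparison Lemma to the scalar differential inequality that the CBF condition imposes along closed-loop trajectories. Concretely, I would start by fixing an initial condition $x_0 \in \mathcal{C}_h$, so that $h(x_0) \geq 0$, and noting that the Lipschitz continuity of $f$, $g$, and $\pi$ guarantees existence and uniqueness of a trajectory $\xi^{\pi}_{x_0, 0}(t)$ of the closed-loop dynamics \eqref{eq:cl_dynamics} on some maximal interval of existence.

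Next, I would evaluate $\dot{h}$ along this trajectory. Because $\pi(x) \in \mathcal{K}_h(x)$ pointwise, substituting $u = \pi(x)$ into the definition of $\mathcal{K}_h$ in \eqref{eq:cbf_constraintset} yields the scalar differential inequality
\begin{equation*}
\frac{d}{dt} h\bigl(\xi^{\pi}_{x_0, 0}(t)\bigr) \geq -\gamma\, h\bigl(\xi^{\pi}_{x_0, 0}(t)\bigr),
\end{equation*}
which holds for almost every $t$ at which the trajectory is defined. The key step is then to apply the Comparison Lemma with the scalar ODE $\dot{z} = -\gamma z$, $z(0) = h(x_0)$, whose unique solution is $z(t) = h(x_0) e^{-\gamma t}$. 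This yields
\begin{equation*}
h\bigl(\xi^{\pi}_{x_0, 0}(t)\bigr) \geq h(x_0)\, e^{-\gamma t} \geq 0
\end{equation*}
for all $t \geq 0$ in the interval of existence, so $\xi^{\pi}_{x_0, 0}(t) \in \mathcal{C}_h$ for every such $t$, which is exactly forward invariance.

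The main obstacle, if any, is technical rather than conceptual: one must justify that the trajectory does not escape in finite time and that the differential inequality holds in the classical sense, both of which follow from the standing Lipschitz assumptions on $f$, $g$, and $\pi$ together with continuous differentiability of $h$. Since $h(x(t)) \geq 0$ along the entire trajectory, $x(t)$ remains in $\mathcal{C}_h$ and hence cannot leave the domain on which the CBF condition is assumed to hold, allowing the argument to be extended globally in forward time. No subtler construction is needed, so the proof is essentially a direct invocation of the Comparison Lemma once the inequality is in place.
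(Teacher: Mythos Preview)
Your proposal is correct and follows the standard Comparison Lemma argument from the cited reference~\cite{AmesXuEtAl2016}; the paper itself does not prove this corollary but simply states it as a known result. There is nothing to compare against, and your argument is exactly the canonical one.
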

Valid CBFs additionally provide a form of natural robustness by enforcing an exponential return to safety.
\begin{myremark}[Asymptotic stability of $\mathcal{C}_h$,~\cite{XuTabuadaEtAl2015}]\label{rem:cbf_unsafe}
    For a valid CBF $h$, $\mathcal{C}_h$ is asymptotically stable. In particular, for $h(x)~<~0$, $h$ describes a Control Lyapunov Function (CLF). While being very invasive, the CLF constraint~\eqref{eq:cbf} enforces an exponential return to safety, which is typically desired. The CBF hence benefits from natural robustness.
\end{myremark}

\subsection{Control Barrier-Value Functions}
In an effort to provide a unifying framework for CBFs and HJ reachability\footnote{We refer the reader to~\cite{BansalChenEtAl2017b} for an overview on HJ reachability.}, Choi et al.~\cite{ChoiLeeEtAl2021} introduced the Control Barrier-Value Function (CBVF), a constructive method for computing \emph{valid} CBFs:
\begin{mydef}[Control Barrier-Value Function~\cite{ChoiLeeEtAl2021}]
    A Control Barrier-Value Function $B_\gamma: \mathcal{X} \times (-\infty, 0] \to \mathbb{R}$ is defined as:
    \begin{equation}\label{eq:cbvf}
     \BVg(x,t) := \maxu \mint e^{\gamma (s-t)} \ell(\xi_{x,t}^u(s)),
    \end{equation}
    for some $\gamma \in \mathbb{R}_+$ and $\forall t\leq 0$, with initial condition $B_\gamma(x, 0) = \ell(x)$.
\end{mydef}
Compared to standard HJ reachability~\cite{BansalChenEtAl2017b}, the introduction of the exponentially increasing term $e^{\gamma (s-t)}$ in~\eqref{eq:cbvf} enables encoding the maximal decrease rate $\gamma$ in~\eqref{eq:cbf}. The standard HJ reachability problem is a special case of~\eqref{eq:cbvf}, where $\gamma=0$, i.e. $B_0$.

Defining $\mathcal{C}_{B_\gamma}(t) = \{x \mid \BVg (x,t) \geq 0\}$, we have the following important result:
\begin{myprop}[CBVF recovers the viability kernel~\cite{ChoiLeeEtAl2021}]\hspace{1cm}
$\forall t \leq 0$, $\gamma \in \mathbb{R}_+$, $\mathcal{C}_{B_\gamma} (t) = \mathcal{S}(t)$.
\end{myprop}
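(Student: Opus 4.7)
The plan is to establish the set equality by double inclusion, leveraging the fact that the exponential weight $e^{\gamma(s-t)}$ is strictly positive on $[t,0]$ and therefore does not change the sign of the integrand compared to the bare constraint function $\ell$. In particular, for any fixed $u(\cdot)$ and any fixed $s\in[t,0]$, we have
\[
e^{\gamma(s-t)}\,\ell(\xi_{x,t}^u(s))\ge 0 \iff \ell(\xi_{x,t}^u(s))\ge 0.
\]
This equivalence, taken pointwise in $s$, will be the workhorse of the argument.

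For the inclusion $\Safe(t)\subseteq \mathcal{C}_{\BVg}(t)$: I would take $x\in\Safe(t)$ and invoke the definition of the viability kernel to obtain an admissible $u(\cdot)\in\UU_{[t,0]}$ such that $\ell(\xi_{x,t}^u(s))\ge 0$ for every $s\in[t,0]$. Multiplying by the positive factor $e^{\gamma(s-t)}$ preserves the inequality, so $\min_{s\in[t,0]} e^{\gamma(s-t)}\ell(\xi_{x,t}^u(s))\ge 0$. Because $\BVg(x,t)$ is obtained by maximizing over $u$, it must dominate this particular choice, giving $\BVg(x,t)\ge 0$ and hence $x\in\mathcal{C}_{\BVg}(t)$.

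For the reverse inclusion $\mathcal{C}_{\BVg}(t)\subseteq \Safe(t)$: I would pick $x$ with $\BVg(x,t)\ge 0$ and use a maximizing control $u^*(\cdot)$ (the definition writes $\max$ rather than $\sup$, and under the Lipschitz/compactness assumptions of Section~\ref{sec:background} together with continuity of $\ell$ over the compact interval $[t,0]$, the maximum is attained; otherwise one works with a supremizing sequence and a standard compactness argument). Then $\min_{s\in[t,0]} e^{\gamma(s-t)}\ell(\xi_{x,t}^{u^*}(s))\ge 0$, and dividing by the strictly positive weight at each $s$ gives $\ell(\xi_{x,t}^{u^*}(s))\ge 0$ for all $s\in[t,0]$. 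This exhibits an admissible control keeping the trajectory in $\LL$, so $x\in\Safe(t)$.

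The proof is essentially a definition-chasing exercise; the only genuine subtlety is the attainment of the supremum in~\eqref{eq:cbvf}, which I would either cite from the HJ reachability literature (e.g.~\cite{EvansSouganidis1984,BansalChenEtAl2017b}) or handle by passing to a maximizing sequence $\{u_k\}$ and extracting a limiting trajectory via Arzel\`a–Ascoli, using Lipschitz continuity of $f,g$ to bound trajectory derivatives uniformly. All other steps — sign preservation under multiplication by $e^{\gamma(s-t)}>0$ and the elementary inequalities relating $\min$ and $\max$ to particular feasible choices — are immediate. Since the argument nowhere uses the specific value of $t\le 0$ or $\gamma\in\R_+$ beyond positivity of the exponential weight, the equality holds for all such $t$ and $\gamma$ simultaneously.
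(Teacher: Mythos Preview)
Your argument is correct and is precisely the natural one: the exponential weight $e^{\gamma(s-t)}$ is strictly positive on $[t,0]$, so it preserves signs, and the set equality follows by double inclusion from the definitions of $\BVg$ and $\Safe(t)$. The only delicate point, attainment of the maximum over controls, you have flagged and handled appropriately.

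Note, however, that the paper does \emph{not} supply its own proof of this proposition; it is stated with a citation to~\cite{ChoiLeeEtAl2021} and used as background. So there is no in-paper argument to compare against. Your proof is the standard one and matches in spirit what appears in the cited reference, where the result originates.
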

A CBVF hence recovers the largest (time-varying) control invariant set for maintaining safety, similar to the standard HJ value function. In practice, we spatially discretize the state space and solve for $B_\gamma$ recursively using dynamic programming on this grid~\cite{Herbert2020}:
\begin{equation}\label{eq:dp}
\begin{split}
    B_\gamma(x,t) = \max_{u \in \mathcal{U}}\min \{& \min_{s \in [t, t + \delta]} e^{\gamma (s-t)}\ell(\xi_{x,t}^u(s)), \\ & e^{\gamma \delta} B_\gamma(\xi_{x,t}^u(t + \delta), t + \delta)\},
\end{split}
\end{equation}
with initial condition $B_\gamma(x,0) = \ell(x)$.

Unlike a standard HJ value function~\cite{BansalChenEtAl2017b}, using a CBVF within a safety filter does allow a trajectory to approach the boundary of the safe set, which is typically desired for robotics applications. In particular, it employs the same derivative constraint as a CBF~\eqref{eq:cbf}, and can hence be enforced principally in an optimization-based safety filter. Similar to~\eqref{eq:cbf_constraintset}, we define the allowable control range (where for generality we consider different discount factors for the \emph{offline} construction $\gi{i}$ and \emph{online} enforcement $\gi{j}$ of the CBVF constraint) as:
\begin{equation}\label{eq:cbvf_constraintset}
    \mathcal{K}^{\gi{j}}_{\BVgi{i}}(x,s) :\left\{u \in \mathcal{U} \mid \dBVgi{i}(x,s) + \gi{j} \BVgi{i}(x,s) \geq 0 \right\}.
\end{equation}

As the discount rate and control bounds are incorporated in the construction of a CBVF $\BVg$, the online control problem is guaranteed to be feasible for the same rate and bounds, i.e. for $\gamma =\gi{i}=\gi{j}$, $\mathcal{K}_{\BVg}^{\gamma}(x,t) \neq \emptyset$ $\forall x \in \mathcal{X}$, $t \leq 0, \gamma \in \mathbb{R}_+$. Additionally, $\mathcal{C}_{B_\gamma}(t) \subseteq \mathcal{S}(t)$ hence the CBVF is a \emph{valid} CBF almost everywhere\footnote{A CBVF might have points of non-differentiability.}. Nonetheless, this approach still suffers from the same dimensionality issues as HJ reachability.

\subsection{Online active safety filter}
Following~\cite{GurrietSingletaryEtAl2018}, given a locally Lipschitz continuous \emph{safety-agnostic nominal policy} $\hat \pi (x)$ we define the minimally invasive safety filter as:
\begin{equation}\label{eq:online-cbf}
    \begin{split}
        u^*(x)= \> \arg \min \limits_{u} \quad &\lVert u - \hat \pi(x) \rVert_R^2 \\
        \text{subject to}\quad & \dot{h}(x) + \gamma h(x) \ \geq 0 \\
        & u \in \mathcal{U},
    \end{split}
\end{equation}
for a positive definite matrix $R \in S^m_+$ and a CBF or CBVF $h(x)$. The safety filter, if feasible everywhere in $\mathcal{C}_h$, guarantees its forward invariance when applying $u^*$. For control affine dynamics, the CBF constraint is linear in $u$. If additionally $\mathcal{U}$ is a convex polytope, \eqref{eq:online-cbf} is a quadratic program, which can be solved in real-time for embedded applications.

\section{Practical Implications}\label{sec:relevance}
This work proposes \refineCBF, which leverages the constructiveness of DP-based reachability analysis, the practical tools and approaches for synthesizing a \emph{candidate} CBF, and the principled online enforcement of CBFs. Compared to CBF candidates \refineCBF (upon convergence) guarantees safety, whereas compared to standard CBVFs \refineCBF enables leveraging expert knowledge and data-driven methods. In addition, \refineCBF (while converging) is at least as safe as the standard CBVF method and often converges faster (albeit empirically). 
As previously mentioned, \refineCBF can be used in combination with any synthesized \emph{candidate} CBF. We necessitate the dynamics (with optionally bounded disturbances) and the obstacles to be known, but do not require any additional assumptions. However, given that \refineCBF relies on spatially discretized DP recursion, we typically consider low-dimensional models with state dimension less than or equal to $6$.

Some exciting use cases for \refineCBF are the following:
\begin{enumerate}
    \item An invalid CBF can be updated to be valid. Throughout the updates, the updated CBVF will become less unsafe with each step (see Section~\ref{sec:theory}).
    \item An overly conservative, yet \emph{valid}, CBF can be updated to become less conservative while maintaining safety through the convergence.
    \item A CBF that was synthesized for a specific set of operating conditions, e.g., a maximum wind disturbance or a maximum control authority, can be refined to a new set of operating conditions.
    \item An implicit CBF (both derivative-free and optimization-based backup CBFs) can be rendered explicit using policy evaluation and value function refinement. This enables reducing conservativeness for derivative-free implicit CBFs~\cite{SingletarySwannEtAl2022} and reducing online computation for optimization-based implicit CBFs~\cite{GurrietMoteEtAl2019}.
\end{enumerate}
Importantly, we argue that for all of these use cases \refineCBF can be used in-the-loop, as every iteration of the DP recursion is at least as safe as the prior iteration. For high-dimensional systems, \refineCBF is limited by the memory of the system (either onboard or in-the-cloud), and less on the computational power.

\section{Refining Control Barrier Functions}\label{sec:theory}
In this work, we refine CBFs using the CBVF formulation. Concretely, this consists of initializing the DP recursion associated with the CBVF with a \emph{candidate} CBF, $h(x)$, instead of the constraint function $\ell(x)$. From an optimization lens, this can be seen as a form of warmstarting, with $h(x)$ being a better initial guess of a \emph{valid} CBF. As such, we can take advantage of tools that have been developed to quickly generate approximately valid \emph{candidate} CBFs, and refine these (marginally) to obtain a \emph{valid} CBVF. We proceed by developing the theoretical foundations that guarantee that the obtained  refined CBVF is valid upon convergence, while providing desired properties while converging. 

To facilitate both the practical implementation and leverage the theoretical guarantees of warmstarting previously developed for HJ reachability~\cite{HerbertBansalEtAl2019}, we first remark the following:
\begin{myprop}
[Forward completeness and safety with larger online discount rate]\label{prop:larger_online}
Applying a discount factor $\gi{j}$ online~\eqref{eq:online-cbf} for a CBVF that is constructed~\eqref{eq:cbvf} with $\gi{i} \leq \gi{j}$ maintains control invariance of the safe set.
\begin{proof}
By inspection of Eq~\eqref{eq:cbvf_constraintset}, if $\gi{j} \geq \gi{i}$, we have that $K_{\BVgi{i}}^{\gi{j}}(x,s) \supseteq K_{\BVgi{i}}^{\gi{i}}(x,s) \neq \emptyset \ \forall x \in \mathcal{C}_{\gi{i}}$ and $s \in [t, 0]$. Hence, applying a larger discount rate $\gi{j}$ online maintains pointwise feasibility. In addition, at the boundary of the viability kernel $x \in \partial \mathcal{S}$, we have that $\BVgi{i} = 0 \  \forall \gi{i}$, hence $K_{\BVgi{i}}^{\gi{j}}(x,s) = K_{\BVgi{i}}^{\gi{i}}(x,s)$. Combined, this assures the same safety guarantees are maintained when applying a larger discount rate $\gi{j}$ online.
\end{proof}

\end{myprop}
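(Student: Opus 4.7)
The plan is to split the claim into two pieces: pointwise feasibility of the online constraint $\mathcal{K}_{\BVgi{i}}^{\gi{j}}$ on $\mathcal{C}_{\gi{i}}$, and forward invariance of $\mathcal{C}_{\gi{i}}$ under any Lipschitz selection from that constraint. This mirrors the standard two-step route for CBF-style invariance arguments, and it lets me reuse the feasibility of the offline $\gi{i}$-constraint (which holds by construction of the CBVF) without re-deriving it.

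For feasibility I would establish the inclusion $\mathcal{K}_{\BVgi{i}}^{\gi{i}}(x,s) \subseteq \mathcal{K}_{\BVgi{i}}^{\gi{j}}(x,s)$ on $\mathcal{C}_{\gi{i}}$ via the definition in~\eqref{eq:cbvf_constraintset}. Because $\BVgi{i}(x,s) \geq 0$ on $\mathcal{C}_{\gi{i}}$ and $0 \leq \gi{i} \leq \gi{j}$, we have $-\gi{j}\BVgi{i}(x,s) \leq -\gi{i}\BVgi{i}(x,s)$, so any $u$ meeting $\dBVgi{i}(x,s) \geq -\gi{i}\BVgi{i}(x,s)$ automatically meets $\dBVgi{i}(x,s) \geq -\gi{j}\BVgi{i}(x,s)$. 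Since the inner set is non-empty throughout $\mathcal{C}_{\gi{i}}$ by the feasibility property quoted just above the proposition, so is the outer set.

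For forward invariance I would then invoke a Nagumo-style boundary argument. On $\partial \mathcal{C}_{\gi{i}}$ the value $\BVgi{i}(x,s)$ vanishes, so the two constraints coincide and both reduce to $\dBVgi{i}(x,s) \geq 0$. Hence a trajectory generated by any Lipschitz $\pi(x) \in \mathcal{K}_{\BVgi{i}}^{\gi{j}}$ cannot cross $\partial \mathcal{C}_{\gi{i}}$ outward; combined with the Comparison Lemma applied to $\BVgi{i}$ along the closed-loop flow, this yields $\BVgi{i}(\xi_{x_0,t_0}^{\pi}(s), s) \geq 0$ for all $s \in [t_0, 0]$, i.e., control invariance of $\mathcal{C}_{\gi{i}}$. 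Forward completeness is then a consequence of local Lipschitzness of the closed-loop vector field and boundedness of $\mathcal{U}$.

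The main obstacle is conceptual rather than technical: using $\gi{j} > \gi{i}$ online strictly relaxes the derivative constraint in the interior of $\mathcal{C}_{\gi{i}}$, so naively one might worry the system is permitted to dive toward the boundary faster than the offline design allows. What rescues the argument is exactly that the relaxation \emph{vanishes} on the level set $\{\BVgi{i} = 0\}$, so the classical CBF boundary reasoning applies verbatim there and no new Hamiltonian or dynamic-programming identity needs to be re-established.
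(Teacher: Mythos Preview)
Your proposal is correct and follows essentially the same route as the paper: first the inclusion $\mathcal{K}_{\BVgi{i}}^{\gi{i}}(x,s) \subseteq \mathcal{K}_{\BVgi{i}}^{\gi{j}}(x,s)$ on $\mathcal{C}_{\gi{i}}$ (the paper states this ``by inspection''), then the observation that the two constraints coincide on the zero level set so that the usual boundary/Nagumo reasoning applies. You simply make explicit the inequality chain and the invariance machinery that the paper leaves implicit in its one-line ``combined, this assures the same safety guarantees.''
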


In this work, we will use $\gi{i} = 0$ for computing the CBVF and use $\gi{j} = \gamma \in \mathbb{R}_+$ to maintain forward invariance through the safety filter,~\eqref{eq:online-cbf}. In practice, we tune $\gamma$ based on the desired application. By considering $\gi{i}=0$ for the CBVF construction, we de facto consider the standard HJ reachability problem. As such, we can leverage the theory for HJ reachability warmstarting developed in~\cite{HerbertBansalEtAl2019}.
\subsection{Warmstarting reachability}
Instead of initializing the CBVF with the constraint function, $B_\ell(x, 0) = \ell(x)$, we initialize with a \emph{candidate} CBF $B_{h}(x,0)=h(x)$ (and drop the subscript of $B_0$ for ease of notation). We denote the converged value of the original reachability problem as $B_{\ell^*}(x) = \lim_{t \to -\infty} B_\ell(x,t)$ and the warmstarted problem as $B_{h^*}(x)$. We denote the 0-superlevel sets of these value functions at time $t<0$ as $\mathcal{C}_\ell(t)$ and $\mathcal{C}_h(t)$, with $\mathcal{C}_\ell(t) \equiv \mathcal{S}(t)$, the viability kernel, by definition. 

Following~\cite{HerbertBansalEtAl2019}, with slightly modified notation, we show that $\mathcal{C}_h(t)$ is a conservative under-approximation of $\mathcal{C}_\ell(t)$ $\forall t< 0$. 

\begin{mytheorem}[Conservativeness of warmstarting,~\cite{HerbertBansalEtAl2019}]\label{thm:classic}
    For any initialization of $B_h(x,0) = h(x)$, the value function at any time throughout the iterations until convergence will be pointwise conservative with respect to the exact solution, i.e. $\forall t < 0, B_h(x,t) \leq B_\ell(x,t) \  \forall x \in \mathcal{X}$.
\end{mytheorem}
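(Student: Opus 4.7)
The plan is to prove the inequality by backward induction in time, leveraging monotonicity of the DP operator associated with~\eqref{eq:dp} at $\gamma = 0$. I would recast the recursion as $B(\cdot, t) = \mathcal{F}[B(\cdot, t + \delta)]$, where the Bellman operator is
$$
(\mathcal{F}V)(x) := \max_{u \in \mathcal{U}} \min\Big\{\min_{s \in [t, t+\delta]} \ell(\xi^u_{x,t}(s)),\; V(\xi^u_{x,t}(t + \delta))\Big\}.
$$
Crucially, $B_\ell$ and $B_h$ obey the identical recursion and differ only in their terminal data at $t = 0$, namely $\ell$ versus $h$. The key lemma to establish is that $\mathcal{F}$ is monotone: $V_1 \le V_2$ pointwise implies $\mathcal{F}V_1 \le \mathcal{F}V_2$ pointwise. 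This follows because, for every fixed $u$ and $x$, the map $V \mapsto \min\{a,\,V(\xi^u(t+\delta))\}$ is non-decreasing in its argument, and $\max_u$ preserves pointwise inequalities.

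Combined with the (implicitly assumed) pointwise dominance $h(x) \le \ell(x)$ on $\mathcal{X}$ --- a natural additional requirement for warmstarting whenever the candidate set $\mathcal{C}$ lies in the constraint set $\LL$ and $h$ is taken as a conservative representative of $\mathcal{C}$ --- the base case $B_h(x, 0) = h(x) \le \ell(x) = B_\ell(x, 0)$ is immediate. Iterating $\mathcal{F}$ backwards in time then yields $B_h(\cdot, -k\delta) \le B_\ell(\cdot, -k\delta)$ for every $k \ge 1$, and hence the desired inequality for all $t < 0$ and $x \in \mathcal{X}$.

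The main obstacle I anticipate is making the implicit assumption $h \le \ell$ explicit: the theorem reads for ``any'' initialization, yet the monotonicity argument clearly requires a signed comparison of the terminal data --- an initialization with $h \ge \ell$ would simply reverse the inequality. I would therefore add $h \le \ell$ as a standing hypothesis in the warmstarting setup and highlight it as the only non-cosmetic requirement inherited from~\cite{HerbertBansalEtAl2019}. A secondary, more technical hurdle is lifting the discrete-step monotonicity to the underlying continuous-time HJ variational inequality, but this follows from the standard viscosity-solution comparison principle invoked in~\cite{MitchellBayenEtAl2005} and is largely routine once the DP-level monotonicity is in hand.
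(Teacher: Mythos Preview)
The paper does not actually supply a proof of Theorem~\ref{thm:classic}; it is quoted from~\cite{HerbertBansalEtAl2019}. Your monotonicity-of-the-Bellman-operator approach is the right mechanism, and matches the spirit of the cited result. However, there is a genuine gap in your argument, and it is precisely the point you flag as an ``obstacle'': you conclude that the hypothesis $h\le\ell$ must be added, whereas the theorem really does hold for \emph{any} initialization, as stated.

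What you are missing is that the constraint function $\ell$ appears \emph{inside} the Bellman operator $\mathcal{F}$ itself, not only as terminal data for $B_\ell$. For \emph{every} function $V$,
\[
(\mathcal{F}V)(x)\;=\;\max_{u}\min\Big\{\min_{s\in[t,t+\delta]}\ell(\xi^u_{x,t}(s)),\,V(\xi^u_{x,t}(t+\delta))\Big\}\;\le\;\max_{u}\min_{s\in[t,t+\delta]}\ell(\xi^u_{x,t}(s)),
\]
simply because the outer $\min$ is bounded above by its first argument. On the other hand, when $V=\ell$ the second argument satisfies $\ell(\xi^u_{x,t}(t+\delta))\ge\min_{s\in[t,t+\delta]}\ell(\xi^u_{x,t}(s))$ (the endpoint is one of the competitors in the running minimum), so $(\mathcal{F}\ell)(x)=\max_{u}\min_{s}\ell(\xi^u_{x,t}(s))$ exactly. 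Combining these two facts gives $\mathcal{F}V\le\mathcal{F}\ell$ for \emph{every} $V$; in particular $B_h(\cdot,-\delta)=\mathcal{F}h\le\mathcal{F}\ell=B_\ell(\cdot,-\delta)$ regardless of how $h$ compares to $\ell$. After this first backward step your own monotonicity argument takes over and propagates the inequality to all $t<0$. Note that the theorem is stated for $t<0$ strictly, which is consistent with this: at $t=0$ one indeed need not have $h\le\ell$. So your proposed extra hypothesis is unnecessary, and adding it would weaken the statement relative to what is actually claimed (and true).
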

Due to this inequality, the safe set (i.e. 0-superlevel set) of the warmstarted value function $B_h(x,t)$ is a subset of the safe set  with respect to the original reachability problem throughout the updates, i.e. $\mathcal{C}_{h}(t) \subseteq \mathcal{C}_{\ell}(t)$. This results in a conservative under-approximation of the true safe set. At convergence we have $\mathcal{C}_{h^*} \subseteq \mathcal{S}^*$, hence a converged CBVF is guaranteed to be valid.

Next we show that \refineCBF can be updated in-the-loop for robotic systems without the need for convergence, as each iteration of the algorithm will either maintain safety (if the candidate CBF is conservative with respect to the viability kernel) or become less unsafe (if the candidate CBF is invalid).
We show that throughout DP recursion (including at convergence), the current iteration's safe set $\mathcal{C}_{h}(t)$ is a subset of the union of the infinite-time viability kernel and the \emph{candidate} CBF, i.e. $\mathcal{C}_{h}(t) \subseteq \mathcal{S}^* \cup \mathcal{C}_{h}$.
In the special case that $\mathcal{C}_{h}$ is a subset (superset) of $\mathcal{S}^*$, every iteration of the refined CBF's safe set is a subset of the viability kernel (candidate CBF's 0-superlevel set).

\begin{mytheorem}\label{thm:moresafe}
   $B_h(x, t - \delta) \leq \max \{B_{\ell^*}(x), B_h(x,t)\} \  \forall x \in \mathcal{X}, \ \forall t\leq 0, \delta \in \mathbb{R}_+$, and in particular $B_h(x,t) \leq \max\{B_{\ell^*}(x), h(x)\}$.
\end{mytheorem}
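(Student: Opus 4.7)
The plan is to recast the DP recursion~\eqref{eq:dp} at $\gamma=0$ as the action of a one-step Bellman operator $(TV)(x) := \max_{u\in\mathcal{U}}\min\{\min_{s\in[t-\delta,t]}\ell(\xi^u_{x,t-\delta}(s)), V(\xi^u_{x,t-\delta}(t))\}$, so that the warmstarted iterates obey $B_h(\cdot,t-\delta) = T B_h(\cdot,t)$, and then to reduce both parts of the theorem to a single one-step inequality: $TV \leq \max\{B_{\ell^*}, V\}$ pointwise. The argument will rest on three elementary properties of $T$: (i) pointwise monotonicity, $V\leq W \Rightarrow TV\leq TW$; (ii) the fixed-point identity $TB_{\ell^*} = B_{\ell^*}$, which is the dynamic-programming principle for the converged HJ value function $B_{\ell^*}(x)=\lim_{t\to-\infty}B_\ell(x,t)$; and (iii) distributivity over pointwise maxima, $T(\max\{A,B\}) = \max\{TA, TB\}$, which follows from $\min\{a,\max\{b,c\}\}=\max\{\min\{a,b\},\min\{a,c\}\}$ together with the commutation of $\max_u$ with outer pointwise maxima.

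Granted the one-step inequality, the first assertion of the theorem is its immediate application with $V = B_h(\cdot,t)$. The ``in particular'' assertion then follows by a short induction on $t$ using properties (i)--(iii): the base $t=0$ reads $B_h(x,0)=h(x)\leq\max\{B_{\ell^*}(x),h(x)\}$ trivially, and the inductive step combines monotonicity with distributivity to yield $B_h(\cdot,t-\delta) = TB_h(\cdot,t) \leq T\max\{B_{\ell^*},h\} = \max\{B_{\ell^*}, Th\}$, which by the one-step inequality applied to $V=h$ is at most $\max\{B_{\ell^*}, \max\{B_{\ell^*}, h\}\} = \max\{B_{\ell^*}, h\}$. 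The entire theorem thus reduces to establishing $TV\leq\max\{B_{\ell^*},V\}$ for the relevant warmstarted iterates.

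I expect the one-step inequality to be the main obstacle. My plan is to fix $x$ and an admissible $u$, set $y=\xi^u_{x,t-\delta}(t)$, and bound $\min\{\min_s\ell(\xi^u(s)), V(y)\}$ by $\max\{B_{\ell^*}(x), V(x)\}$ via case analysis on the position of $V(y)$ relative to $B_{\ell^*}(y)$ and $V(x)$. Property (ii) supplies, for every $u$, the inequality $\min\{\min_s\ell(\xi^u(s)), B_{\ell^*}(y)\}\leq B_{\ell^*}(x)$: when $V(y)\leq B_{\ell^*}(y)$ this substitutes immediately to give the desired bound, and when $V(y)\leq V(x)$ the bound follows trivially from the inner $\min$. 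The delicate branch is when $V(y)$ strictly exceeds both $B_{\ell^*}(y)$ and $V(x)$; here I would combine the short-horizon continuity of $\ell$ along the trajectory with the warmstarting machinery of~\cite{HerbertBansalEtAl2019} (the reference underlying Theorem~\ref{thm:classic}) to force $\min_s\ell(\xi^u(s))\leq\max\{B_{\ell^*}(x), V(x)\}$, after which taking $\max_u$ closes the proof.
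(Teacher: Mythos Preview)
Your operator-theoretic framing is clean, and properties (i)--(iii) of $T$ are all correct. However, the entire argument rests on the one-step inequality $(TV)(x) \leq \max\{B_{\ell^*}(x), V(x)\}$, and this inequality is \emph{false} for general $V$ --- in particular for $V = h$ an arbitrary candidate CBF, which is exactly the instance your inductive step requires. A concrete obstruction: take the one-dimensional drift $\dot x = 1$ (no control), and let $\ell(x) = 1$ for $x \leq 0$, decreasing linearly to $-1$ on $[0,2]$, and $\ell(x) = -1$ for $x \geq 2$. Then $B_{\ell^*} \equiv -1$. Choose any $h$ with $h(-10) = 0$ and $h(-9) = 0.5$. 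With step $\delta = 1$, the trajectory from $x = -10$ ends at $y = -9$, and $m_u := \min_s \ell = 1$ along it. Hence $(Th)(-10) = \min\{1, 0.5\} = 0.5$, whereas $\max\{B_{\ell^*}(-10), h(-10)\} = \max\{-1, 0\} = 0$. Your ``delicate branch'' is precisely this configuration --- $V(y) > B_{\ell^*}(y)$ and $V(y) > V(x)$ simultaneously --- and neither short-horizon continuity of $\ell$ nor the conservativeness result of Theorem~\ref{thm:classic} forces $m_u \leq \max\{B_{\ell^*}(x), V(x)\}$; here $m_u = 1$ while the right-hand side is $0$. So the reduction you propose does not close.

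The paper does not attempt a uniform one-step bound. Instead it partitions states according to the \emph{initial} datum: for $x$ with $h(x) \leq B_{\ell^*}(x)$ it uses a continuity-in-time plus fixed-point stationarity argument (Lemma~\ref{lem:limited_contraction}) to show $B_h(x,t)$ can never rise above $B_{\ell^*}(x)$; for $x$ with $h(x) \geq B_{\ell^*}(x)$ it invokes the monotonicity-in-$t$ of the recursion~\eqref{eq:dp} (Lemma~\ref{lem:getting_safer}), identifying $B_h(x,0)$ with a \emph{time-shifted} value $B_\ell(x,\tilde t)$ of the standard problem, to argue $B_h(x,\cdot)$ is non-increasing. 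Your Bellman-operator route does not see this split, because the obstruction above sits entirely in the second case and cannot be handled by comparing a single application of $T$ to a fixed $V$; what the paper exploits is a property of the entire backward-time evolution, not of one step.
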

\begin{proof}
We first provide two intermediate results (Lemma~\ref{lem:limited_contraction} and Lemma~\ref{lem:getting_safer}), which together help prove Theorem~\ref{thm:moresafe}.
\begin{mylem}[Limited contraction]\label{lem:limited_contraction}
    For all $x \in \mathcal{X}$ such that $B_h(x,0) = h(x) \leq B^*_\ell(x)$ we have that $B_h(x,t) \leq B^*_\ell(x)$ $\forall t < 0$.
\end{mylem}
\begin{proof}
    By contradiction. Suppose there exists $\tilde x$ and $\tilde t$ such that $B_h(\tilde x, \tilde t) > B_{\ell^*}(\tilde x)$. By the continuity of the update backwards in time there exists $\delta \in [0, -\tilde t]$ such that $B_h(\tilde x, \tilde t + \delta) = B_{\ell^*}(x)$. Since by definition this point is stationary (as $B_{\ell^*}(x)$ is the converged value), we have that $B_h(\tilde x, \tilde t + \delta + \epsilon) = B_h(\tilde x, \tilde t + \delta) = B_{\ell^*}(\tilde x)$ $\forall \epsilon < 0$, hence also for $\epsilon = -\delta$, resulting in $V(\tilde x, \tilde t) = B_{\ell^*}(\tilde x)$. This leads to a contradiction, and as such proves the lemma.
\end{proof}
Lemma~\ref{lem:limited_contraction} assures that any point that is initially conservative will remain conservative throughout the updates. In terms of characterizing the safe set, we have that for all $x$ such that $h(x) \leq B_{\ell^*}(x)$, $\mathcal{C}_h(t) \subseteq \mathcal{S}^*$, hence we are guaranteed that our safe set is contained within the viability kernel throughout the updates at said $x$, maintaining safety.

\begin{mylem}[Updates do not lead to more unsafety]\label{lem:getting_safer}
    For all $x\in \mathcal{X}$ such that $B_h(x,0) = h(x) \geq B^*_\ell(x)$, $B_h(x,t - \delta) \leq B_h(x,t)$ $\forall t \leq 0, \delta \in \mathbb{R}_+$
\end{mylem}
\begin{proof}
By definition of the dynamic programming recursion. By~\eqref{eq:dp}, $B_\ell(x, \tilde t - \delta) \leq B_\ell(x, \tilde t)$ $\forall \tilde t \leq 0, \delta \in \mathbb{R}_+$. This guarantees that the function will never decrease in value (i.e. relative safety level). For all such $x$, there exists $\tilde t$ such that $B_h (x,0) = B_\ell (x,\tilde t)$. Hence $B_h(x, t-\delta) \leq B_h(x, t)$ $\forall t \leq 0, \delta \in \mathbb{R}_+$, so $B_h(x,t)$ is guaranteed to maintain or contract to a higher value (and therefore safety level).
\end{proof}

Lemma~\ref{lem:getting_safer} assures that any point that is initially unsafe will not become more unsafe, i.e. obtain a lower value. In terms of characterizing the safe set, every iteration is contained in the prior, i.e. $\mathcal{C}_h(x, t - \delta) \subseteq \mathcal{C}_h(x, t)$ $\forall \delta \in \mathbb{R}_+$ at said $x$.

Combined, Lemma~\ref{lem:limited_contraction} and~\ref{lem:getting_safer} derive Theorem~\ref{thm:moresafe}.
\end{proof}

Theorem~\ref{thm:moresafe} implies that at every iteration of \refineCBF its safe set is contained in the union of the viability kernel and its previous iteration, i.e. $\mathcal{C}_h(t-\delta) \subseteq \mathcal{S}^* \cup \mathcal{C}_h(t)$ $\forall t \leq 0, \delta \in \mathbb{R}_+$. In particular this implies that its safe set is at all times contained within the safe set of the union of the viability kernel and the \emph{candidate} CBF. Lastly, if the \emph{candidate} CBF is contained within the viability kernel (Figure~\ref{fig:illustration}, top), then \refineCBF's safe set is contained within the viability kernel at all times or alternatively if the viability kernel is contained within the \emph{candidate} CBF then the safe set of each iteration of \refineCBF is contained within the previous safe set (Figure~\ref{fig:illustration}, bottom). Hence, our proposed method guarantees the CBVF to become monotonically more safe (or remain safe if already safe) with every DP iteration, and by Theorem~\ref{thm:classic} the CVBF is guaranteed to be valid upon convergence. As such, we can use this algorithm in-the-loop as with every update the CBVF is provably less unsafe.

Nonetheless, we point out that while converging we can only guarantee invariance with respect to the 0-superlevel set of $\ell(x)$ for a time $t$. However, we stress that the time-varying CBVF is at least as safe as using the candidate CBF.
\begin{myremark}[Pointwise feasibility of warmstarted value function]
    The CBVF at time $t$ can only guarantee that the system will stay within constraint set $\LL$ for the current time horizon $[t,0]$. This is because~\eqref{eq:cbf} includes a partial derivative of the CBVF with time while converging, i.e., $\dot h(x,t) = \frac{\partial h}{\partial t}(x,t) + L_f h(x,t) + L_g h(x,t) u$.  However, if the candidate CBF was safe, the refined CBVF will be safe over an infinite horizon throughout its iterations. In practice, one can often disregard the partial derivative with time when implementing the CBVF input constraint in a safety filter~\eqref{eq:online-cbf}.
\end{myremark}

\section{Practical implementation of \refineCBF}\label{sec:practice}
In this section, we describe how to leverage the theory developed above in practice for safety-critical autonomous systems. Specifically, we follow the following steps to refine a CBF (either online or offline):
\begin{enumerate}
    \item We spatially discretize the analytical CBF $h(x)$, requiring the evaluation of the CBF at each point of this grid.
    \item We apply \refineCBF, i.e.  CBVF DP updates iteratively, online in the loop or offline until convergence. 
    \begin{enumerate}
        \item Based on Proposition~\ref{prop:larger_online} we opt to use $\gamma=0$ for the CBVF update~\ref{eq:cbvf} and are free to tune $\gamma \geq 0$ in the safety filter~\eqref{eq:online-cbf} based on the desired application. A larger $\gamma$ results in a less invasive safety filter, i.e. allowing a faster decrease to the safety boundary.
        \item Leveraging warmstarting for HJ reachability with $\gamma=0$, we can initialize \refineCBF with the spatially discretized state. Theorem~\ref{thm:moresafe} guarantees that the updates do not become more unsafe (and converge to a valid CBVF) when applying the update recursively backward in time.
    \end{enumerate}
    \item Online, we can efficiently implement the safety filter~\eqref{eq:online-cbf} using spatial finite differences for computing the Lie derivatives at the grid points and interpolation to quantify the value function and its derivative.  
\end{enumerate}

\section{Simulation results}\label{sec:results}
We highlight the versatility of \refineCBF on a range of examples in simulation, covering a wide range of systems and different use cases for \refineCBF and provide an open-source implementation of our implementation at \href{https://github.com/UCSD-SASLab}{https://github.com/UCSD-SASLab/refineCBF}. All figures depict the obstacles as the complement of the constraint set, $\mathcal{L}^c$, in gray and the viability kernel, $\mathcal{S}^*$, in green. The evolving safe set boundary of \refineCBF is shown in several figures from dark to light gray and the converged safe set boundary is depicted in dark green. In practice, for all examples considered, \refineCBF recovers (a tight conservative approximation of) the viability kernel upon convergence. All examples consider a discount factor of $\gamma=5$ in the safety filter, Eq~\eqref{eq:online-cbf}, and depict the nominal policy $\hat \pi$.
\begin{figure}[!t]
    \vspace{0.25cm}
    \centering
    \includegraphics[width=\columnwidth]{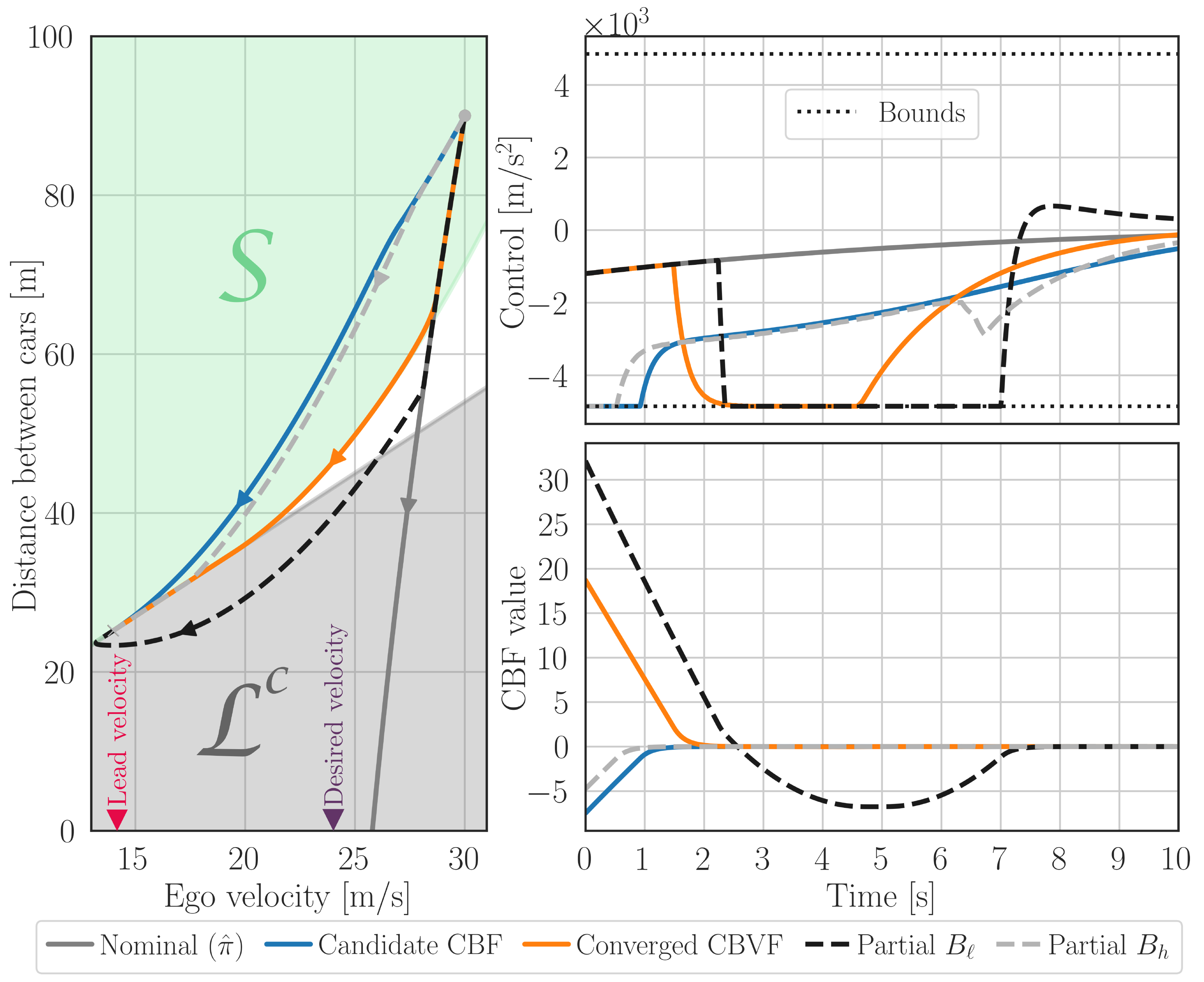}
    \caption{An adaptive cruise control system is tasked with maintaining a desired speed (24 m/s) while keeping a safe distance to a lead vehicle, subject to (de-)acceleration constraints. The CBVF-based safety filter can maintain the vehicle's desired speed for a longer period than the CBF-based safety filter achieving higher performance while maintaining safety guarantees. The partially converged warmstarted $B_h$ (safe) and standard CBVF $B_\ell$ (unsafe) demonstrate the issues with using vanilla reachability in the loop.}
    \label{fig:acc}
    \vspace{-0.3cm}
\end{figure}
\subsection{Adaptive Cruise Control: Accounting for friction}
An adaptive Cruise Control system involves an asymptotic performance objective (drive at a desired speed), safety constraints (maintain a safe distance from the car in front of you), and constraints based on the physical characteristics of the car and road surface (bounded acceleration and deceleration)~\cite{AmesXuEtAl2016}. For simplicity, we consider a scenario where the lead vehicle has a constant speed $v_0$, which is lower than the desired speed $v_d$ of the ego vehicle. The constraint function is defined as $\ell(x) = z - T_h v$ with $T_h$ the look-ahead time, $z$ the distance between the vehicles and $v$ the ego-velocity. We control acceleration with bounds $u \in [-m c_d g, mc_d g]$, with mass $m$, gravitational constant $g$ and $c_d$ the maximum g-force that should be applied. The dynamics are as follows: $\dot v = -\frac{1}{m}F_r(v) + \frac{1}{m}u$, $\dot z = v_0 - v$, with $F_r(v) = f_0 v^2 + f_1 v + f_2$ the friction (aerodynamic and rolling resistance). For a simplified model that neglects friction, we can compute its viability kernel analytically; It is defined as the 0-superlevel set of the valid CBF $h(x)~=~z~-~T_h v~-~\frac{(v_0 - v) ^ 2}{2c_dg}$. The CBF $h(x)$ is also guaranteed to be safe for the original dynamics as friction augments the action of braking, but is conservative, especially at higher speeds where friction has a significant effect on the dynamics (shown by the diverging boundaries of $\mathcal{S}^*$ and $\mathcal{L}^c$ in Figure~\ref{fig:acc}, left). We show that we can refine $h(x)$ to become less conservative, i.e. such that the safety filter is less invasive, with \refineCBF, and in practice, we recover the viability kernel upon convergence. The improved performance of the safety filter using $h^*(x)$ (converged CBVF) compared to the candidate CBF is shown in simulation in Figure~\ref{fig:acc}, right. The converged CBVF safety filter allows the driver to maintain a desired velocity for a longer period while guaranteeing safety and respecting the input constraints.

Additionally, we depict the partially converged CBVF $B_h$ (initialized with $h(x)$) and the partially converged CBVF $B_\ell$ (initialized with $\ell(x)$), to highlight that you can use $B_h$ in a safety filter (provided $h(x)$ is conservative), whereas $B_\ell$ leads to unsafety.

\begin{figure}[!t]
    \vspace{0.25cm}
    \centering
    \includegraphics[width=\columnwidth]{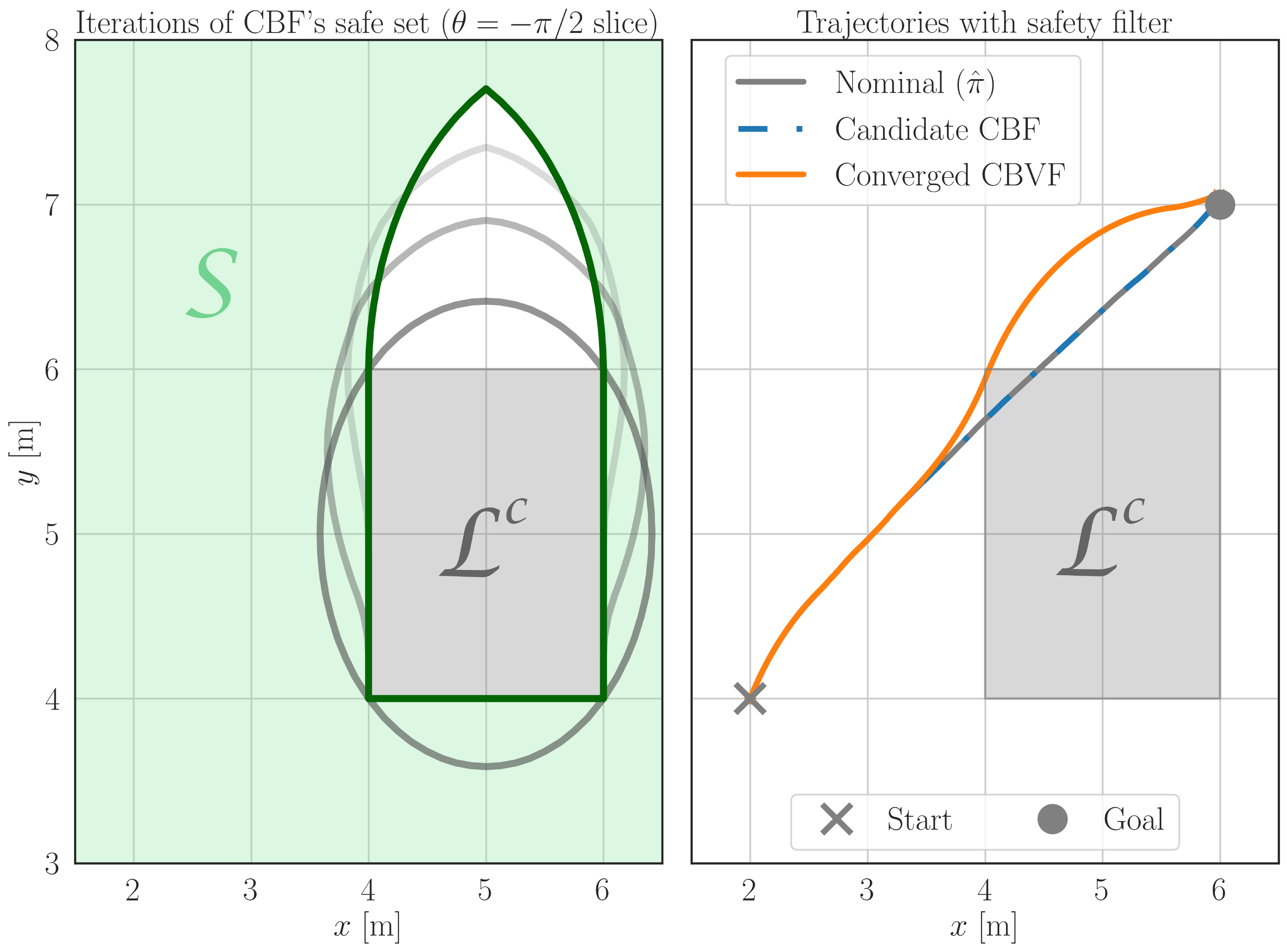}
    \caption{The 3D-Dubins car model with fixed velocity is tasked with getting to a target while avoiding a polyhedral obstacle. The CBF formed by over-approximating the obstacle as an ellipsoid (dark gray) is of relative degree 2 and cannot be used within a safety filter. In contrast, the converged CBVF (dark green) is a subset of the viability kernel. The CBVF constraint can be enforced by a safety filter and enables reaching the goal successfully without hitting the obstacle.}
    \label{fig:dubins}
    \vspace{-0.3cm}
\end{figure}
\subsection{Dubins Car: Polytopic constraints}
Constructing a CBF for non-holonomic systems is a challenge, especially in combination with polytopic constraints. We consider a Dubins car model: $\dot x = v \cos(\theta)$, $\dot y = v \sin(\theta), \dot \theta = u$, where $x, y$ are position, $\theta$ is heading, $v$ is a fixed speed, and we control the angular velocity $u \in [-0.5, 0.5]$. We consider the objective of navigating to a goal while having to navigate around an obstacle. In practice, polytopic constraints are usually approximated as points, paraboloids or hyper-spheres to ensure the distance function can be calculated explicitly as an analytic expression from their geometric configuration~\cite{ThirugnanamZengEtAl2021}. We approximate the obstacle with an ellipse, see Figure~\ref{fig:dubins} (left). Additionally, for a purely positional CBF, $h = h(x,y)$, the Dubins car model has relative degree $2$, and a standard CBF cannot provide safety guarantees. While an exponential CBF can be synthesized for higher relative degree systems~\cite{NguyenSreenath2016}, we highlight how \refineCBF can be used to obtain a relative degree 1 CBVF from a higher-relative degree candidate CBF. The convergence of the CBVF over time from the CBF is shown in Figure~\ref{fig:dubins} (left). At convergence, we recover the viability kernel. Additionally, in Figure~\ref{fig:dubins} (right) we show how the converged CBVF can be used to guarantee safety, unlike the candidate CBF.

\subsection{Quadrotor: Expert synthesized CBFs}
In practice, CBFs are often synthesized as follows: (1) the dynamics are linearized around an operating point (e.g. the mid-point of the operating region), (2) an LQR feedback law $K$ is devised such that the closed-loop system $\dot{x} = (A + BK)u$ is stable, (3) a Lyapunov function $V(x)$ is constructed from the Lyapunov equation for the closed-loop system, (4) the Lyapunov function is mirrored in the state plane and a constant $c > 0$ is added to construct the candidate CBF; $h(x) = c - V(x)$. However, these barriers do not capture the nonlinearities of the model, are difficult to tune and do not provide safety guarantees. Yet they often still provide a good approximation of the safe set. Thus, when locally refined using \refineCBF we can obtain safety guarantees. We consider a self-righting 6-state planar quadrotor model adopted from~\cite{SinghRichardsEtAl2020} that upon being deployed stabilizes (close) to a desired hover position and we consider constraints on the vertical position (not crashing into the ground and the ceiling). In reality, close to the ground (ceiling), a large positive (negative) vertical velocity is safe, but hand-crafting a continuously differentiable CBF to encode this is difficult.
\begin{figure}[!t]
    \vspace{0.25cm}
    \centering
    \includegraphics[width=\columnwidth]{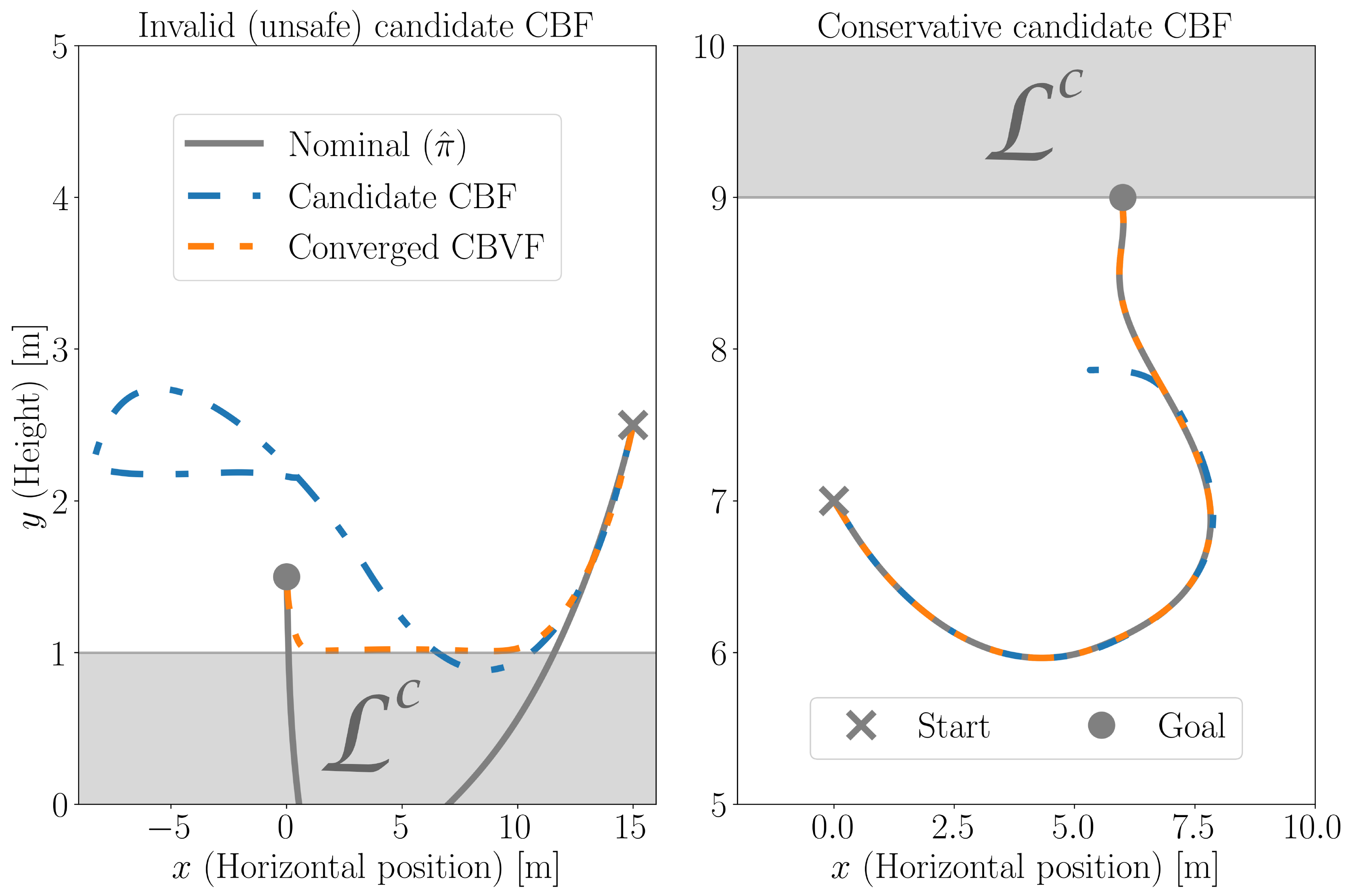}
    \caption{A self-righting planar quadrotor is tasked with stabilizing to a desired stationary point in the $(x,y)$ plane. The safety filter attempts to avoid collision with the obstacles. The expert-synthesized CBF leads to unsafety (left) for some tasks, while hindering in reaching a safe goal (right) for others. In contrast, the converged CBVF maintains safety and achieves higher performance.}
    \label{fig:quad}
    \vspace{-0.4cm}
\end{figure}
To aid with the curse of dimensionality we consider a 4-state quadrotor that disregards the horizontal position $x$ and velocity $v_x$ for refining the CBF. We design an expert synthesized candidate CBF to maintain ground and ceiling clearance and penalize high velocity and angular velocity to avoid having a CBF of relative degree $2$. We showcase how updating a candidate CBF can lead to higher performance (reaching a desired hover position faster) and safety (not crashing) in Figure~\ref{fig:quad}. Additionally, Figure~\ref{fig:illustration} highlights the difficulty of synthesizing a CBF that is valid in the entire state space and depicts the evolution of the safe set of \refineCBF in time.

\begin{figure}[!t]
    \vspace{0.2cm}
    \centering
    \includegraphics[width=\columnwidth]{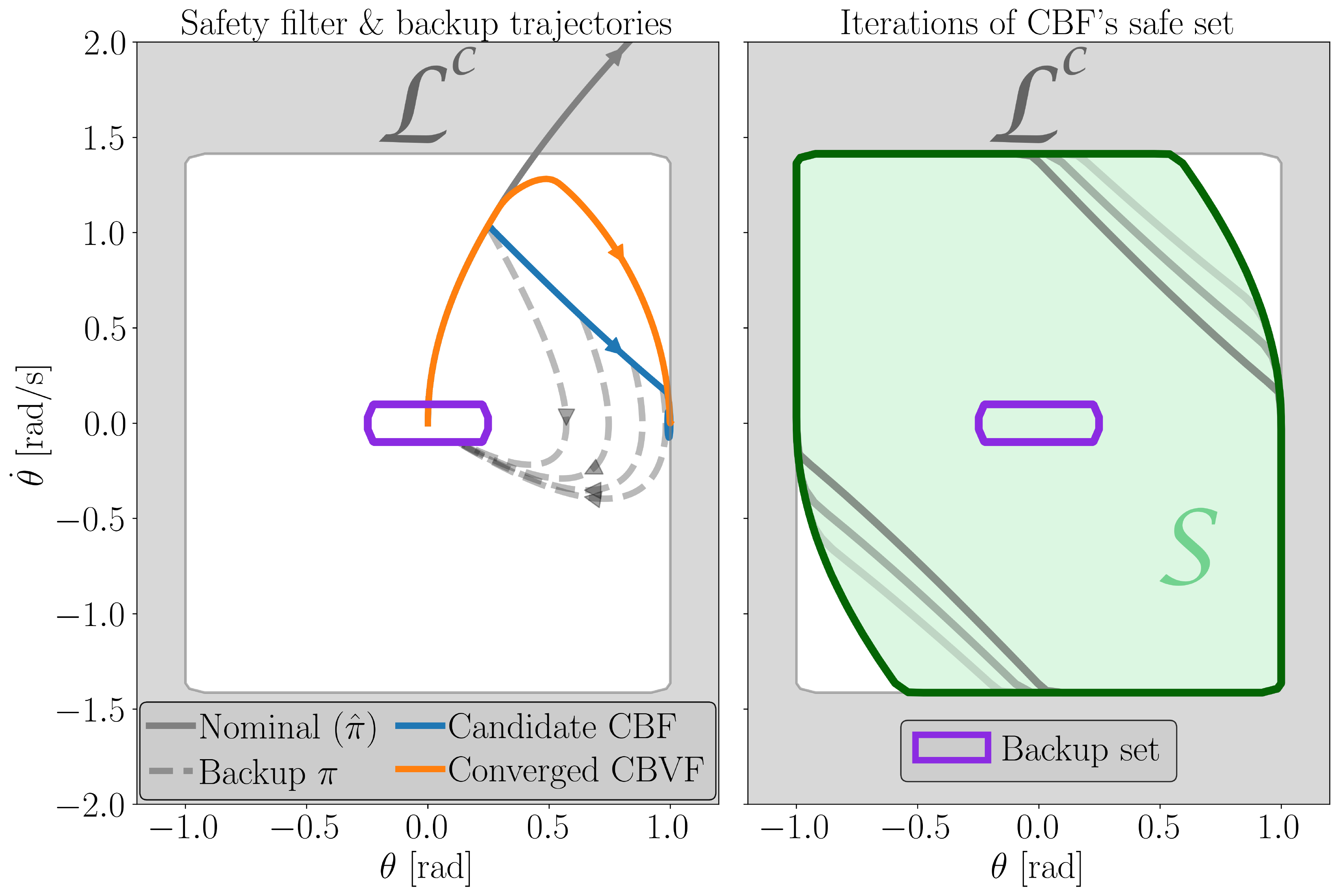}
    \caption{An implicit backup CBF (left) provides safety by assuring the trajectory remains within reach (dashed trajectories) of a backup set (purple). Policy evaluation is used to obtain an explicit representation (right, gray) and enables improving performance as \refineCBF converges (to dark green) while maintaining the safety guarantees of the backup CBF approach.}
    \label{fig:inv_pend}
    \vspace{-0.4cm}
\end{figure}
\subsection{Inverted Pendulum: Backup CBF}
An alternative approach to constructing a valid viable set leverages the backup CBF scheme (see Section~\ref{sec:relevance}, use case 4). However, enforcing the implicit CBF requires either imposing many CBF-like constraints in a safety filter or using a hand-tuned performance-safety trade-off controller. Instead, with \refineCBF, we can obtain an explicit representation of the valid candidate, yet implicit, CBF and enforce the CBF condition through a single constraint. In practice, after spatially discretizing the backup CBF, we perform policy evaluation~\cite{SuttonBarto1998} using the backup controller and its predefined fixed time horizon $T$ to compute the value of the backup controller at each state. Given that the backup-based CBF approach defines a valid, yet conservative, CBF, we obtain an explicit representation of a value function describing a control invariant set. We can then further improve (i.e. render less conservative) the CBF, while guaranteeing safety throughout convergence. Unlike derivative-free approaches~\cite{SingletarySwannEtAl2022}, \refineCBF can be used with a principled safety filter and comes with natural robustness (see Remark~\ref{rem:cbf_unsafe}).

We consider the inverted pendulum example from~\cite{SingletarySwannEtAl2022}. Figure~\ref{fig:inv_pend} (left) showcases how an implicit CBF can be used to enforce safety through a backup policy (gray) using a trade-off controller (candidate CBF). Figure~\ref{fig:inv_pend} (right) demonstrates how the explicit representation of this valid CBF can be refined (dark to light gray) with \refineCBF to recover (dark green) the viability kernel while retaining safety throughout convergence.\vspace{0.4cm}
\addtolength{\textheight}{-4.4cm}

\section{Conclusions}\label{sec:conclusion}
In this work we propose \refineCBF, an algorithm that leverages HJ reachability, a constructive method, to refine candidate CBFs. We guarantee that the updated CBVF is at least as safe as the candidate CBF throughout the convergence of \refineCBF -- and obtain a valid CBVF upon convergence. We demonstrated \refineCBF through simulations in which we demonstrate that our CBF refinement approach is applicable for a broad class of candidate CBFs and demonstrate a simple, yet powerful method to improve the performance and/or safety of CBF-based safety filters.

\textbf{Future work:}
First, we plan on validating \refineCBF on hardware experiments to validate that it can be used in-the-loop and demonstrate its utility when the system is faced with different operating conditions (e.g. larger wind disturbance) that may violate initial assumptions. Additionally, we are interested in leveraging this method to locally refine boundary regions of learned CBFs that cannot be verified using e.g., adversarial sampling or formal verification methods.  

\section*{Acknowledgements}
We would like to thank Jason J. Choi and Sicun Gao for the insightful discussions.

\bibliographystyle{IEEEtran}
\bibliography{main, ASL_papers, SASLab}

\newcommand{\noopsort}[1]{} \newcommand{\printfirst}[2]{#1}
  \newcommand{\singleletter}[1]{#1} \newcommand{\switchargs}[2]{#2#1}
\begin{thebibliography}{10}
\providecommand{\url}[1]{#1}
\csname url@samestyle\endcsname
\providecommand{\newblock}{\relax}
\providecommand{\bibinfo}[2]{#2}
\providecommand{\BIBentrySTDinterwordspacing}{\spaceskip=0pt\relax}
\providecommand{\BIBentryALTinterwordstretchfactor}{4}
\providecommand{\BIBentryALTinterwordspacing}{\spaceskip=\fontdimen2\font plus
\BIBentryALTinterwordstretchfactor\fontdimen3\font minus
  \fontdimen4\font\relax}
\providecommand{\BIBforeignlanguage}[2]{{%
\expandafter\ifx\csname l@#1\endcsname\relax
\typeout{** WARNING: IEEEtran.bst: No hyphenation pattern has been}%
\typeout{** loaded for the language `#1'. Using the pattern for}%
\typeout{** the default language instead.}%
\else
\language=\csname l@#1\endcsname
\fi
#2}}
\providecommand{\BIBdecl}{\relax}
\BIBdecl

\bibitem{FisacLugovoyEtAl2019}
J.~F. Fisac, N.~F. Lugovoy, V.~{Rubies-Royo}, S.~Ghosh, and C.~J. Tomlin,
  ``Bridging hamilton-jacobi safety analysis and reinforcement learning,'' in
  \emph{{Proc.\ IEEE Conf.\ on Robotics and Automation}}, 2019.

\bibitem{BrunkeGreeffEtAl2021}
L.~Brunke, M.~Greeff, A.~W. Hall, Y.~Zhaocong, S.~Zhou, J.~Panerati, and A.~P.
  Schoellig. (2021) Safe learning in robotics: From learning-based control to
  safe reinforcement learning. Available at
  \url{https://arxiv.org/abs/2108.06266}.

\bibitem{KollerBerkenkampEtAl2018}
T.~Koller, F.~Berkenkamp, M.~Turchetta, and A.~Krause, ``Learning-based model
  predictive control for safe exploration,'' in \emph{{Proc.\ IEEE Conf.\ on
  Decision and Control}}, 2018.

\bibitem{DawsonLowenkampEtAl2022}
C.~Dawson, B.~Lowenkamp, D.~Goff, and C.~Fan, ``Learning safe, generalizable
  perception-based hybrid control with certificates,'' \emph{{IEEE Robotics and
  Automation Letters}}, vol.~7, no.~2, pp. 1904--1911, 2022.

\bibitem{SingletarySwannEtAl2022}
A.~Singletary, A.~Swann, Y.~Chen, and A.~D. Ames, ``Onboard safety guarantees
  for racing drones: High-speed geofencing with control barrier functions,''
  \emph{{IEEE Robotics and Automation Letters}}, vol.~7, no.~2, pp. 2897--2904,
  2022.

\bibitem{AmesCooganEtAl2019}
A.~D. Ames, S.~Coogan, M.~Egerstedt, G.~Notomista, K.~Sreenath, and P.~Tabuada,
  ``Control barrier functions: {T}heory and applications,'' in \emph{{European
  Control Conference}}, 2019.

\bibitem{ZengZhangEtAl2021}
J.~Zeng, B.~Zhang, Z.~Li, and K.~Sreenath, ``Safety-critical control using
  optimal-decay control barrier function with guaranteed point-wise
  feasibility,'' in \emph{{American Control Conference}}, 2021.

\bibitem{RobeyHuEtAl2020}
A.~Robey, H.~Hu, L.~Lindemann, H.~Zhang, D.~V. Dimarogonas, S.~Tu, and
  N.~Matni, ``Learning control barrier functions from expert demonstrations,''
  in \emph{{Proc.\ IEEE Conf.\ on Decision and Control}}, 2020.

\bibitem{DawsonQinEtAl2021}
C.~Dawson, Z.~Qin, S.~Gao, and C.~Fan, ``Safe nonlinear control using robust
  neural lyapunov-barrier functions,'' in \emph{{Conf.\ on Robot Learning}},
  2021.

\bibitem{GurrietMoteEtAl2019}
T.~Gurriet, M.~Mote, A.~D. Ames, and E.~Feron, ``An online approach to active
  set invariance,'' in \emph{{Proc.\ IEEE Conf.\ on Decision and Control}},
  2019.

\bibitem{ChenJankovicEtAl2021}
Y.~Chen, M.~Jankovic, M.~Santillo, and A.~D. Ames. (2021) Backup control
  barrier functions: Formulation and comparative study. Available at
  \url{https://arxiv.org/abs/2104.11332}.

\bibitem{SingletaryGurrietEtAl2020}
A.~Singletary, T.~Gurriet, P.~Nilsson, and A.~D. Ames, ``Safety-critical rapid
  aerial exploration of unknown environments,'' in \emph{{Proc.\ IEEE Conf.\ on
  Robotics and Automation}}, 2020.

\bibitem{AbateCoogan2020}
M.~Abate and S.~Coogan, ``Enforcing safety at runtime for systems with
  disturbances,'' in \emph{{Proc.\ IEEE Conf.\ on Decision and Control}}, 2020.

\bibitem{XuTabuadaEtAl2015}
X.~Xu, P.~Tabuada, J.~W. Grizzle, and A.~D. Ames, ``Robustness of control
  barrier functions for safety critical control,'' \emph{{IFAC-Papers Online}},
  vol.~48, no.~27, pp. 54--61, 2015.

\bibitem{MitchellBayenEtAl2005}
I.~M. Mitchell, A.~M. Bayen, and C.~J. Tomlin, ``A time-dependent
  {Hamilton-Jacobi} formulation of reachable sets for continuous dynamic
  games,'' \emph{{IEEE Transactions on Automatic Control}}, vol.~50, no.~7, pp.
  947--957, 2005.

\bibitem{HerbertChoiEtAl2021}
S.~Herbert, J.~J. Choi, S.~Qazi, M.~Gibson, K.~Sreenath, and C.~J. Tomlin,
  ``Scalable learning of safety guarantees for autonomous systems using
  hamilton-jacobi reachability,'' in \emph{{Proc.\ IEEE Conf.\ on Robotics and
  Automation}}, 2021.

\bibitem{LeungSchmerlingEtAl2019}
K.~Leung, E.~Schmerling, M.~Zhang, M.~Chen, J.~Talbot, J.~C. Gerdes, and
  M.~Pavone, ``On infusing reachability-based safety assurance within planning
  frameworks for human-robot vehicle interactions,'' \emph{{Int.\ Journal of
  Robotics Research}}, vol.~39, pp. 1326--1345, 2020.

\bibitem{ChoiLeeEtAl2021}
J.~J. Choi, D.~Lee, K.~Sreenath, C.~J. Tomlin, and S.~L. Herbert, ``Robust
  control barrier-value functions for safety-critical control,'' in
  \emph{{Proc.\ IEEE Conf.\ on Decision and Control}}, 2021.

\bibitem{EvansSouganidis1984}
L.~C. Evans and P.~E. Souganidis, ``Differential games and representation
  formulas for solutions of hamilton-jacobi-isaacs equations,'' \emph{{Indiana
  Univ.\ Mathematics Journal}}, vol.~33, no.~5, pp. 773--797, 1984.

\bibitem{AmesXuEtAl2016}
A.~D. Ames, X.~Xu, J.~W. Grizzle, and P.~Tabuada, ``Control barrier function
  based quadratic programs for safety critical systems,'' \emph{{IEEE
  Transactions on Automatic Control}}, vol.~62, no.~8, pp. 3861--3876, 2016.

\bibitem{BansalChenEtAl2017b}
S.~Bansal, M.~Chen, S.~Herbert, and C.~J. Tomlin, ``{Hamilton-Jacobi}
  reachability: A brief overview and recent advances,'' in \emph{{Proc.\ IEEE
  Conf.\ on Decision and Control}}, 2017.

\bibitem{Herbert2020}
S.~Herbert, ``Safe real-world autonomy in uncertain and unstructured
  environments,'' Ph.D. dissertation, {Dept. of EECS, Univ.\ of California,
  Berkeley}, 2020.

\bibitem{GurrietSingletaryEtAl2018}
T.~Gurriet, A.~Singletary, J.~Reher, L.~Ciarletta, E.~Feron, and A.~Ames,
  ``Towards a framework for realizable safety critical control through active
  set invariance,'' in \emph{{ACM/IEEE Int.\ Conf.\ on Cyber-Physical
  Systems}}, 2018.

\bibitem{HerbertBansalEtAl2019}
S.~L. Herbert, S.~Bansal, S.~Ghosh, and C.~J. Tomlin, ``Reachability-based
  safety guarantees using efficient initializations,'' in \emph{{Proc.\ IEEE
  Conf.\ on Decision and Control}}, 2019.

\bibitem{ThirugnanamZengEtAl2021}
A.~Thirugnanam, J.~Zeng, and K.~Sreenath, ``Safety-critical control and
  planning for obstacle avoidance between polytopes with control barrier
  functions,'' in \emph{{Proc.\ IEEE Conf.\ on Robotics and Automation}}, 2022.

\bibitem{NguyenSreenath2016}
Q.~Nguyen and K.~Sreenath, ``Exponential control barrier functions for
  enforcing high relative-degree safety-critical constraints,'' in
  \emph{{American Control Conference}}, 2016.

\bibitem{SinghRichardsEtAl2020}
S.~Singh, S.~M. Richards, V.~Sindhwani, J.-J.~E. Slotine, and M.~Pavone,
  ``Learning stabilizable nonlinear dynamics with contraction-based
  regularization,'' \emph{{Int.\ Journal of Robotics Research}}, 2020.

\bibitem{SuttonBarto1998}
R.~S. Sutton and A.~G. Barto, \emph{Reinforcement Learning: An Introduction},
  1st~ed.\hskip 1em plus 0.5em minus 0.4em\relax {MIT Press}, 1998.

\end{thebibliography}
\end{document}